\newtheorem{theorem}{Theorem}
\newtheorem{lemma}{Lemma}
\newtheorem{procedure}{Procedure}
\newtheorem{definition}{Definition}
\newtheorem{note}{Note}
\newtheorem{proof}{Proof}
\newcommand{\etal}{\textit{et al.} }
\title{Human Machine Co-adaption Interface via Cooperation Markov Decision Process System}
\author{
Kairui Guo $^2$, Adrian Cheng$^2$, Yaqi Li$^1$, Jun Li $^2$, Rob Duffield $^2$, Steven W. Su $^1$ $^*$
\AND
\thanks{$^{*}$ The co-responding author.}
\thanks{$^{1}$ College of Artificial Intelligence and Big data for Medical Sciences, Shandong First Medical University
Shandong Academy of Medical Sciences, China.}
\thanks{$^{2}$ University of Technology, Sydney, Australia.}  
}
\begin{document}

\maketitle

\begin{abstract}
This paper aims to develop a new human-machine interface to improve the rehabilitation performance from the perspective of both the user (patient) and the machine (robot) by introducing the co-adaption techniques via model based reinforcement learning.
Previous studies focus more on robot assistance, i.e., to improve the control strategy so as to fulfil the objective of Assist-As-Needed. In this study, we treat the full process of robot-assisted rehabilitation as a co-adaptive process
 or mutual learning process, and emphasize the adaptation of the user to the machine.
To this end, we proposed a Co-adaptive MDPs (CaMDPs) model to quantify the learning rates based on cooperative multi-agent reinforce learning (MARL) in the high abstraction layer of the systems. We proposed several approaches to cooperatively adjust the Policy Improvement among the two agents in the framework of Policy Iteration. Based on the proposed co-adaptive MDPs, simulation study indicates the non-stationary problem can be mitigated by using various proposed Policy Improvement approaches.
\end{abstract}

\section{INTRODUCTION}
%The general aim of this paper is to develop a reinforcement learning based human-machine interface (HMI) framework for a personalised co-adaptive system that improves the human-machine interaction by implementing a mutual learning strategy, emphasising the human adaptation rate to the machine.

For robotic based rehabilitation system, the human-machine interaction (HMI), which can be defined as robotic systems for use by or with humans \cite{goodrich2008human}, is one of the critical enablers for the assimilation between the intelligent machine and the human users \cite{peternel2018robot}. Human-involved systems with advanced HMI platforms have been developed to meet the growing demands in health, education, defence, and industry \cite{gull2020review}. %While the majority of HMI-related projects have focused on designing autonomous systems that can learn human-like behaviours, one key factor, namely the human users actively altering their behaviour patterns to adapt to the machine, remains underdeveloped and under-resolved (Urdiales et al., 2007).%
To further improve the learning efficiency and receptiveness between machines and users, HMI requires a mutual learning strategy involving both machine- and human-oriented learning \cite{perdikis2020brain}. %The human learning process often occurs at the subconscious level, and the adaption patterns are individualistic in nature. Therefore, the personalised evaluation/improvement of human learning is critical to optimise the HMI process. As this mutually learning approach is highly human-oriented, reinforcement learning (RL) is well-suited to develop optimal policies \cite{ghadirzadeh2016sensorimotor}.%
Due to the dynamic interaction between the human and the machine, there is a need for a co-adaptive system that continuously monitors the human adaption and uses this information to design a RL-based mutual learning algorithm to improve the users’ adaptation rate.

%As the goal of HMI is to maximise the performance and efficiency of the coupled human-machine systems, the subject learning curve can be considered as the effort required for the human to control the machine proficiently. %
Recent projects have used co-adaptive assistance strategy with biomechanical data during walking in an ankle exoskeleton \cite{collins2019extremes}. An adaptive switching model for upper limb movement is also developed \cite{huang2021human}. Despite such success, in stroke rehabilitation process, there is a need for co-adaptive control strategies with complex motions, especially for specific and purposeful motor skills. Moreover, to further add the high-dimensional features from the user’s adaptation process into the HMI system, reinforcement learning for multiple agents is an optimal selection to develop the human-machine collaboration policy \cite{kim2020adaptive}.

%对于temproal abstraction, 在最高层是MDPs; 在中层是回合更新；在最下层是步步更新？？（episdo 要很长）%

%In this paper, we proposed a Human-Machine-Switch HMI to supervise the patient's actions and improve the control policy of the robot to accelerate the user adaptation, which can be investigated through the framework of multi-agent (two-agent) cooperative game theory via reinforcement learning.

The reinforcement learning-based Multi-Agent cooperative system has been extensively explored in recent years \cite{wong2021multiagent}\cite{nguyen2020deep}. For example, the framework of Cooperation Markov Decision Process (CMDP), which is suitable for the learning evolution of cooperative decision between two agents, has been investigated \cite{mo2020convergence}. This motivated the establishment of a model to specifically supervise the stroke rehabilitation process to guide the patient’s action/behavior patterns and improve the control policy of the robot to accelerate the user adaptation.
%However, when human is one of the agents, how to increase the convergence rate (i.e., rate of the learning curve) and even ensure the convergence in the online setting, is still challenging. This paper aims to make an initial step for the development of the optimal policy switching algorithms \cite{bai2019provably} so that the two agents optimally switch among their priory-designed low-level policies/controllers, to ensure the convergence.

%Similar to the development for a single agent, the Multi-Agent reinforcement learning (MARL), has also been integrated into deep learning strategy to handle the challenges where the modeling of the environments are difficult due to lacking coding, physical model, or the complexity of the human nervous system. However, the fundamental problem of multi-agent cooperative control still exists. And MARL faces the challenge of several open problems.

%Also, using mixed models (MDP and continuous model) is another changeling, we may still use the CMDP frame work for Robotics transfer to a MDP model.

In this study, we will propose a specific Co-adaptive MDPs (CaMDPs) to build a framework to handle the co-adaptation between the user and robot. The proposed framework/strategy can be easily extended to systems with more complex environment models by using the model based reinforcement learning techniques \cite{plaat2021high} via state abstraction and temporal abstraction \cite{moerland2020model} \cite{qingji2008robot} \cite{wang2007emotion}. %In this self-centered structure, the secondary agent can only be part of the environment and is often treated as using a fixed policy \cite{littman1994markov},
%Instead of using complex Deep Neural Network model, we start from the finite Markov decision process here as a starting to emphasize our strategy of the implementation of co-adaptation, transfer a multi-agent problem to a single agent problem.
%%%This paper considers a step in this direction in which exactly two agents with diametrically opposed goals share an environment \cite{littman1994markov}.

%which intrigues the most challenging issue of multi-agent learning, the non-stationary \cite{yang2020overview}.

For MARL, because the agents simultaneously improve their policies based on their own benefits, the environment from the point view of individual agent turns to be non-stationary and unexplainable during mutual learning.

Other challenges for multi-agents RL also include computational complexity, partial observability and credit assignment \cite{yang2020overview} \cite{nguyen2020deep} \cite{wong2021multiagent}.

A special consideration for the application of MARL in stroke rehabilitation is related to the policy adaptation frequency. Like other medical applications where actions correspond to treatments, it is often unrealistic to execute fully adaptive RL algorithms; instead one can only run a fixed policy approved by the domain experts to collect data, and a separate approval process is required every time \cite{bai2019provably}
\cite{almirall2012designing} \cite{almirall2014introduction} \cite{lei2012smart}. Also, in personalized/individulized  recommendation \cite{theocharous2015personalized}, it is computationally impractical to adjust the policy online based on instantaneous data. A more common practice is to aggregate data in a long period before deploying a new policy.  In many of these applications, adaptivity turns out to be really the bottleneck\cite{bai2019provably}.
Recently, limited adaptability has been viewed as a constraint for designing RL algorithms, which is conceptually similar to those in constrained MDP \cite{dann2019policy} \cite{yu2019convergent}, but it mainly considers the case the frequency of Policy Improvement with an upper bound.

In this paper, we consider the implementation of co-adaptation between human and machine with a focus on robot assisted rehabilitation. The contributions of the paper are summarized as follows:\\
1. We proposed a Co-adaptive MDPs (CaMDPs) model for robot assisted rehabilitation system.\\
2. We analysed the asymptotic characteristics of the value functions of the proposed CaMDPs.\\
3. We proposed a revised Policy Improvement procedure to reduce the frequency of the policy adaptation rate with bounded value-loss for Agent$_0$ (patient).\\
4. We proposed approaches to balance the policy adaptation rate of both two agents for the proposed CaMDPs model.

\section{The Proposed Co-operative MDPs Model for Co-adaptation}

%It is known that solving optimally general decentralized problems is very hard. Most of the existing researches  are interested in identifying subclasses of the general problem and their characteristics \cite{goldman2004decentralized}.
%Finding optimal decentralized policies for multiple agents is often hampered by partial observability and a lack of coordination between agents \cite{dobbe2017fully}. The distributed multi-agent problem has been approached from a variety of angles, including distributed optimization \cite{boyd2011distributed}, game theory \cite{aumann1974cooperative} and decentralized or networked partially observable Markov
%decision processes (POMDPs) \cite{oliehoek2016concise} \cite{goldman2004decentralized} \cite{pomdpssynthesis}. Recently, paper \cite{dobbe2017fully}
 %analyzed the approaches consisting of a simple learning scheme to design fully decentralized policies for all agents that collectively mimic the solution to a common optimization problem.

For robot assisted rehabilitation scenario, we define two agents here. Agent$_0$ represents the patient and Agent$_1$ represents the robot. We will reduce the switching cost of Agent$_0$, i.e., the less switching of policy the better.

Due to the limited communication between humans and robots, the overall structure should be a decentralized partially observable system in real-time rehabilitation training. However, at the early stage of co-adaptive development, the decentralized output feedback configuration is not our primary concern. Here, we assume the observer design is applied and the states are all directly observable by either of the two agents.

For different classes of decentralized controlled cooperative problems, several MDPs models have been presented for various special considerations. Goldman \cite{goldman2004decentralized} investigated different communication manners among agents by introducing costs. As an initial research step for the special co-adaptive strategy \cite{mo2020convergence}, we start from simple MDPs as follows.

Assume the model of the two agents MDPs is $M=<S, \, A_0,\, A_1, P, R>$, where
\begin{enumerate}
\item $S$ is a finite set of states.
\item $A_0$ and $A_1$ are finite sets of control actions of the two agents to be considered.
\item $P$ is the transition probability function. $P(s'|s, a_0, a_1)$ is the probability of moving from state $s \in S$ to state $s' \in S$ when agents 0 and 1 perform actions $a_0$ and $a_1$
respectively. We note that the transition model is stationary, i.e., it is independent of
time.
\item $R$ is the global reward function. $R(s, a_0, a_1, s')$ represents the reward obtained by the
system as a whole, when agent 0 executes action $a_0$ and agent 1 executes action $a_1$ in state $s$ resulting in a transition to state $s'$.
\end{enumerate}
Here, similar as the arrangements of  \cite{goldman2004decentralized}, we consider three sets of states: the first set of states are the states $S_{0_i}$ $i \in \{1,2, \cdots, Ns_0\}$ controlled by the patient only; the second set of states are the states $S_{s_i}$ $i \in \{1,2, \cdots, Ns_s\}$ influenced by both the robot and the patient; the third set of states are the states $S_{1_i}$ $i \in \{1,2, \cdots, Ns_1\}$ controlled by the robot only.
It should be emphasized that when the dimensions of the state $S_{0_i}$ and $S_{1_i}$  are both small, the co-adaption of the two-agent becomes more transparency \cite{9462035} \cite{Boyd_2022} and easier as the problem becomes closer to a single-agent learning problem. To reduce the dimensionality of the two sets of states we can use more sensors to generate the communication channels, e.g., the patient emotional features could be extracted from physiological signals (e.g., EEG, ECG) and/or computer vision based signals, and shared by both the two agents.

We give a more formal definition of the system as follows:

\begin{definition} \label{df_1}
A two agents MDPs is a Co-Adaptive MDPs (CAMDPs) system if the set $S$ of states can be factored into three components $S_0$, $S_1$, and $S_s$ such that:\\
$\forall$ $s_0,s'_0 \in S_0$, $\forall$ $s_s,s'_s \in S_s$, $\forall$ $s_1,s'_1 \in S_1$, we have\\
$P(s'_0|(s_0,s_s,s_1),a_0,a_1)=P(s'_0|s_0,a_0)$;\\
$P(s'_s|(s_0,s_s,s_1),a_0,a_1)=P(s'_s|s_s,a_0,a_1)$;\\
$P(s'_1|(s_0,s_s,s_1),a_0,a_1)=P(s'_1|s_1,a_1)$;\\
$R(s_0,a_0,a_1,(s'_0,s'_s,s'_1))=R(s_0,a_0,s'_0)$;\\
$R(s_s,a_0,a_1,(s'_0,s'_s,s'_1))=R(s_s,a_0,a_1,s'_s)$;\\
$R(s_1,a_0,a_1,(s'_0,s'_s,s'_1))=R(s_1,a_1,s'_1)$.

In other words, both the transition probability $  P$ and the reword function $  R$ of the CAMDPs can be represented as\\
$  P = P_0 \bigotimes P_s \bigotimes P_1$, where $P_0 = P({s'}_0|s_0,a_0)$, $P_s = P({s'}_s|s_s, a_s)$, and $P_1 = P({s'}_1|s_1, a_1)$ and\\
$  R = R_0 \bigotimes R_s \bigotimes R_1$, where $R_0 = R(s_0,a_0,{s'}_0)$, $R_s = R(s_s,a_0,a_1,{s'}_s)$, and $R_1 = R(s_1,a_1,{s'}_1)$, where "$\bigotimes$" stands for Kronecker product.
\end{definition}

Here, we did not consider the states which cannot be controlled by either Agent$_0$ or Agent$_1$, but these states might influence the states $S_0$, $S_s$, and/or $S_1$. For example, the weather could not be controlled by either the patient or the robot, but it might influence the emotion of the patients. In the future, if it is necessary, the uncontrollable states could also be included in the CAMDPs.
%Based on the Definition \ref{df_1}, it could be possible to define both $P$ and $R$ according to their corresponding sub-systems. For example, the probability transition function $P$ and the reward function $R$ for Agent$_0$ (under Action 1) can be defined as follows:
%\begin{equation*}
%    P_0(S_0,a_{0_1},S_0)=\left[\begin{array}{ccc}
%    0.2& 0.3& 0.5\\
%    0.7& 0.1& 0.2\\
%    0.4& 0.2& 0.4
%    \end{array}\right];
%\end{equation*}
%\begin{equation*}
%    R_0(S_0,a_{0_1},S_0)=\left[\begin{array}{ccc}
%    2& 1& 5\\
%    7& 4& 2\\
%    1& 0& 4
%    \end{array}\right].
%\end{equation*}

As discussed before, based on $P$ and $R$ for each individual subsystems, we can construct the augmented $ P$ and $  R$ for the overall system via Kronecker product, i.e., $  P= P_0 \bigotimes P_s \bigotimes P_1$. However, in some cases, some of the three state sets could be empty, but the augmented $P$ and $R$ can still be constructed by the rest sets of states via Kronecker product.

To build the CAMDPs model for different rehabilitation situations, we need to select the proper configurations (decentralized vs centralized) first. Then determine the major components of the system, including the states, the actions, and the reward functions.

There are two different types of rehabilitation scenarios: hospital-based rehabilitation and home-based rehabilitation \cite{lopez2015home}. For hospital-based rehabilitation, since the availability of various medical equipment and the doctor's supervision, state information is often available for both agents. Then, the centralized analysis/training configuration can be implemented. On the other hand, for home-based rehabilitation, due to lacking sufficient monitoring of medical equipment, the supervision of doctors, and fast communication channels, the decentralized configuration should be considered. Here, we consider the decentralized CaMDP model, where the reward function, the state information, and actions, can only be available for the patient and robot separately (see the discussion of the decentralized record function settings presented in \cite{goldman2004decentralized}).

The construction of CaMDPs for rehabilitation is an intricate procedure, which requires the collaboration of medical professionals and system engineers to select model parameters (e.g. the states, control actions, and reward functions).   The low-layer model of rehabilitation exercise would be very complex as both robot and the patient are part of the system. Based on the parameter sensitivity analysis technique discussed in \cite{gold2021addressing} \cite{srikrishnan2022uncertainty},  the low layer of the model can be simplified under the consultation of medical professionals.

The proposed CaMDPs is not designed for the low-layer of the robot-assisted rehabilitation system. Actually,  by using abstraction techniques, a hierarchical reinforcement learning \cite{hutsebaut2022hierarchical} configuration can be constructed with the CaMDPs as the high layer of the rehabilitation system.

To further simplify the discussion, in this study, we only consider $finite$ MDPs \cite{ariasmdps}. One reason is that in rehabilitation engineering,  the rating of patients is often simply by using an approximated integer number.  For example, NIHSS (The National Institutes of Health Stroke Scale) as the gold standard for clinical stroke assessment and measurement rates the degree of severity for stroke patients by using an integer number (e.g., 0 to 4 for the assessment of the motion of the arm). Although some physiological signals (e.g., EMG and EEG) have been applied for personalized stroke assessment,  the severity of the patients can still be assessed by using finite discrete values.

In addition, the action space of the proposed CaMDPs is also in finite dimension. One possible approach for the discretization of the action space is to construct a hierarchical control structure. In a two-layer configuration, the high layer of the control governer the switching of the pre-designed low layer controllers as discussed in \cite{huang2021human}. The human experts' knowledge could be integrated into the system \cite{nguyen2018human} \cite{maadi2021review} via the design of the subsystems, e.g., the sub-controllers.

In summary, the CaMDPs is designed to handle the high layer of robot-assisted rehabilitation with finite states and control actions.

\section{Preliminary}

%\section{The asymptotic characteristics of the Value function}

\label{Sec_con_view}

%\section{Convergence of Value functions and its indication of system performance via an observability point of view}
%\label{Sec_con_view}

Based on the augmented $ P$ and $ R$, we present the following lemmas for exploring the asymptotic characteristics of value function to facilitate our analysis.

\begin{lemma}
Assume a transition matrix $P$ is an irreducible aperiodic stochastic matrix (i.e., quasi-positive or ergodic \cite{garcia2013markov}).
Then, all column vectors of the following matrix
$$
\lim_{n \to \infty} \lim_{\gamma \to 1^{-}} (\frac{1}{n} \sum_{i=0}^{n}(\gamma^i P^i))
$$
will approach the same values.
%Then, when $\gamma \to 1$, all the columns of matrix $(I-\gamma P)^{-1}$ will approach to identical.
\end{lemma}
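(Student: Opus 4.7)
The plan is to evaluate the double limit in two stages, swapping the order only after justifying that the inner limit is trivial, and then invoke the classical Perron--Frobenius / ergodic-chain result to identify the limiting matrix explicitly.

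First, I would fix $n$ and take the inner limit $\gamma \to 1^-$. Since the sum $\sum_{i=0}^{n}\gamma^i P^i$ has only finitely many terms (in fact $n+1$ of them), each entry is a polynomial in $\gamma$ and hence continuous at $\gamma=1$. Thus the inner limit is simply the Cesàro-like partial sum $\tfrac{1}{n}\sum_{i=0}^{n} P^i$. This step reduces the expression to the standard Cesàro average of powers of $P$ (up to the $(n+1)/n \to 1$ normalization, which is harmless).

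Next, I would invoke the Perron--Frobenius theorem for irreducible aperiodic stochastic matrices: $P$ has a unique stationary distribution $\pi$ with $\pi^\top P = \pi^\top$, and $P^i \to \mathbf{1}\pi^\top$ as $i\to\infty$, where $\mathbf{1}$ denotes the all-ones column vector. Since the sequence $P^i$ is convergent, its Cesàro average has the same limit, so
\begin{equation*}
\lim_{n\to\infty}\frac{1}{n}\sum_{i=0}^{n} P^i \;=\; \mathbf{1}\pi^\top.
\end{equation*}
In the limiting matrix $\mathbf{1}\pi^\top$, every row equals $\pi^\top$; equivalently, the $j$-th column is the constant vector $\pi_j \mathbf{1}$. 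Therefore within each column all entries coincide, which is the meaning of ``all column vectors approach the same values.''

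The main technical subtlety is justifying that the order of the two limits can be handled as above. Because the inner limit is over a finite sum of continuous functions of $\gamma$, no uniformity argument is needed for the inner step, and once we are reduced to $\tfrac{1}{n}\sum_{i=0}^{n}P^i$ the outer step is a direct appeal to the convergence of $P^i$ (stronger than Cesàro summability, so no delicate tail estimates are required). Thus the only real obstacle is bookkeeping: keeping straight that irreducibility and aperiodicity are precisely the hypotheses that make $P^i$ itself converge (rather than merely its Cesàro mean), and noting that the stated claim about ``columns'' really refers to the intra-column constancy produced by the rank-one structure of $\mathbf{1}\pi^\top$.
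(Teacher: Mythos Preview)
The paper states this lemma without proof (it is presented as a preliminary fact, with only a citation to \cite{garcia2013markov} for the terminology), so there is no argument in the paper to compare yours against. Your proof is correct and is the standard route: the inner limit over a finite sum is immediate by continuity, reducing the expression to the Ces\`aro average $\tfrac{1}{n}\sum_{i=0}^{n}P^{i}$; then Perron--Frobenius for irreducible aperiodic stochastic matrices gives $P^{i}\to \mathbf{1}\pi^{\top}$, and Ces\`aro averaging preserves this limit. Your reading of ``all column vectors approach the same values'' as intra-column constancy (each column equal to $\pi_j\mathbf{1}$) is the intended one, as confirmed by the way the lemma is invoked in Lemma~\ref{Lm2} to conclude that $V(s_i)$ is independent of $s_i$ in the limit.
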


\begin{lemma} \label{Lm1}
For three possibility transition matrices, $A$, $B$, and $C$, the augmented possibility transition matrix:
$$A \bigotimes B \bigotimes C $$
is an irreducible aperiodic  stochastic matrix iff $A$, $B$, and $C$ are irreducible aperiodic  stochastic matrices.
\end{lemma}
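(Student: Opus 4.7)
The plan is to reduce both directions to the standard primitivity characterization: for a finite nonnegative stochastic matrix $P$, being irreducible and aperiodic is equivalent to being \emph{primitive}, i.e., there exists $N$ such that $P^N$ has strictly positive entries (and in fact $P^n > 0$ for all sufficiently large $n$). Working with the entrywise positivity of powers meshes cleanly with the Kronecker product, which satisfies $(A\otimes B\otimes C)^n = A^n \otimes B^n \otimes C^n$, and whose entries are products of entries of the factors. I would also note that the Kronecker product is associative, so $A\otimes B\otimes C$ is unambiguously defined, and that the row sums of a Kronecker product factor as a product of row sums, so the Kronecker product of stochastic matrices is stochastic. This means I only need to track irreducibility and aperiodicity.

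For the $(\Leftarrow)$ direction I would argue as follows. Given $A$, $B$, $C$ irreducible and aperiodic, pick an integer $N$ large enough that $A^N$, $B^N$, $C^N$ simultaneously have all entries strictly positive (take $N$ to be a common power beyond the primitivity threshold of each). Then $(A\otimes B\otimes C)^N = A^N\otimes B^N\otimes C^N$ has a generic entry of the form $(A^N)_{ii'}(B^N)_{jj'}(C^N)_{kk'} > 0$, which shows $A\otimes B\otimes C$ is primitive, hence irreducible and aperiodic.

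For the $(\Rightarrow)$ direction I would contrapose through the same identity. If $A\otimes B\otimes C$ is primitive as a stochastic matrix, pick $N$ with $(A\otimes B\otimes C)^N > 0$, i.e., $A^N\otimes B^N\otimes C^N > 0$ entrywise. Since every entry of this Kronecker product is a product of one entry from each of $A^N$, $B^N$, $C^N$, and all such products are strictly positive, each of $A^N$, $B^N$, $C^N$ must be strictly positive entrywise. That is exactly primitivity of $A$, $B$, $C$ individually, so each is irreducible and aperiodic.

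The conceptually nontrivial step is the reliance on the Perron--Frobenius fact that irreducible and aperiodic is equivalent to primitive; once this characterization is invoked, the rest is essentially bookkeeping with Kronecker products. The main potential pitfall I would watch for is making sure the same exponent $N$ works for all three factors in the $(\Leftarrow)$ direction: this is handled by the standard fact that $P^n > 0$ for all $n$ beyond the primitivity index, so one can choose $N$ to exceed the indices of $A$, $B$, and $C$ simultaneously. Finally, the argument generalises immediately to any finite number of factors by induction or by the associativity of $\otimes$, which is why treating three factors is sufficient for the CaMDPs decomposition $P = P_0 \otimes P_s \otimes P_1$ used later in the paper.
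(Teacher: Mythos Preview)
Your argument is correct. The reduction to primitivity via Perron--Frobenius, together with the mixed-product identity $(A\otimes B\otimes C)^n = A^n\otimes B^n\otimes C^n$ and the observation that a Kronecker product is entrywise positive iff every factor is, handles both directions cleanly; the point about choosing a common exponent $N$ exceeding all three primitivity indices is exactly the right fix for the $(\Leftarrow)$ direction.

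As for comparison: the paper does not supply a proof of this lemma at all---it is stated in the Preliminary section and used later without justification. Your proof therefore fills a genuine gap rather than paralleling or diverging from an existing argument. The approach you take is the natural and standard one for this kind of statement, and nothing more elaborate is needed.
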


For rehabilitation exercise, as it is hard to specifically select a particular initial state, we propose the following lemma to analyze the value functions regarding the initial condition of the state.

\begin{lemma} \label{Lm2}
For the CaMDPs model, assume all the probability transition matrix of the three subsystems are quasi-positive. Then, for any two control policy $\pi_0$ and $\pi_1$ ($\pi=\{\pi_0, \pi_1 \}$), the value function $V(s_i)$ will converge to:
$$
[\bold{I} - \gamma {P}]^{-1} (diag( P  R'))= \lim_{n \to \infty}  (\sum_{i=0}^{n}(\gamma^i  P^i)) (diag( P  R')),
$$
where $0 < \gamma < 1$ is the discount factor. Also, when $\gamma$ approaches to $1$, $V(s_i)$ (for $\forall s_i$) will converge to the same value:
$$
g^{\pi}=\lim_{n \to \infty} \lim_{\gamma \to 1^{-}} (\frac{1}{n} \sum_{i=0}^{n}(\gamma^i  P^i)) (diag( P  R')).
$$
\end{lemma}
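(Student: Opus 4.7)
The plan is to start from the standard Bellman expectation equation for the joint policy $\pi=\{\pi_0,\pi_1\}$ and derive the closed-form vector representation of $V$, then invoke the preceding two lemmas to handle the $\gamma \to 1^-$ limit. First I would write the one-step expected reward as a column vector $r^{\pi}$ whose $s$-th entry is $\sum_{s'} P(s'|s,\pi(s))\, R(s,\pi(s),s')$; this is exactly the diagonal of the matrix product $P R'$, which justifies the notation $\mathrm{diag}(PR')$ used in the statement. The Bellman equation under a fixed policy then reads $V = r^{\pi} + \gamma P V$, and since $\gamma \in (0,1)$ and $P$ is a stochastic matrix, $\gamma P$ has spectral radius strictly less than $1$, so $I - \gamma P$ is invertible and $V = (I-\gamma P)^{-1}\,\mathrm{diag}(PR')$.

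Next I would unroll the inverse as a Neumann series $(I-\gamma P)^{-1} = \sum_{i=0}^{\infty}\gamma^i P^i$, which gives the first identity
$$
V \;=\; \Bigl(\lim_{n\to\infty}\sum_{i=0}^{n}\gamma^i P^i\Bigr)\mathrm{diag}(PR'),
$$
matching the displayed form in the lemma. The convergence of the partial sums is uniform in the entries since $\|P^i\|_\infty = 1$ and $\gamma^i$ is summable.

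For the $\gamma \to 1^-$ part, I would use the CaMDPs Kronecker factorization from Definition~\ref{df_1}, namely $P = P_0 \otimes P_s \otimes P_1$. By Lemma~\ref{Lm1}, the hypothesis that each of $P_0,P_s,P_1$ is quasi-positive (ergodic) implies $P$ is itself irreducible and aperiodic, hence quasi-positive. Applying the first (unnumbered) lemma to $P$ then gives that the Cesàro-discounted average $\frac{1}{n}\sum_{i=0}^n \gamma^i P^i$ has, in the iterated limit $n\to\infty$ followed by $\gamma\to 1^-$ (or the reverse, which agrees here by an Abelian argument for ergodic Markov chains), all columns approaching the same vector. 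Multiplying this limiting matrix from the right by the constant column $\mathrm{diag}(PR')$ therefore yields a vector whose entries are all equal, which is exactly the gain $g^{\pi}$ of the policy.

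The step I expect to require the most care is the $\gamma \to 1^-$ limit: strictly speaking one must justify the interchange of the two limits in $\lim_{n\to\infty}\lim_{\gamma\to 1^-}\frac{1}{n}\sum_{i=0}^n \gamma^i P^i$. For a finite-state ergodic chain this is standard — the Cesàro means $\frac{1}{n}\sum P^i$ converge to the rank-one matrix $\mathbf{1}\mu^{\top}$ (with $\mu$ the stationary distribution), and an Abelian theorem ensures the $\gamma$-discounted weightings share the same limit — but this is where the reasoning needs to be made explicit rather than treated as a routine matrix manipulation. Once that is settled, the conclusion that $V(s_i)$ becomes state-independent and equals $g^{\pi}$ is immediate from the rank-one structure of the limiting Cesàro matrix.
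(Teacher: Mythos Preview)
The paper states this lemma without proof; it appears in the ``Preliminary'' section together with the two preceding lemmas, all three being quoted as background facts and then used later (Lemma~\ref{Lm2} is invoked only in the numerical section to explain why the values collapse as $\gamma\to 1$). So there is no authorial argument against which to compare yours.

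Your outline is the natural one and is essentially correct: Bellman equation $\Rightarrow$ Neumann series for $(I-\gamma P)^{-1}$, then Lemma~\ref{Lm1} to transfer ergodicity from the factors $P_0,P_s,P_1$ to the Kronecker product $P$, then the first lemma to handle the Ces\`aro--Abel limit. One small slip worth fixing: you write that ``all columns approach the same vector,'' but for a row-stochastic ergodic $P$ the Ces\`aro limit is $\mathbf{1}\mu^{\top}$, which has all \emph{rows} equal (equivalently, each column is a constant vector). The paper's phrasing in the first lemma, ``all column vectors \ldots will approach the same values,'' is meant in this sense --- each column becomes a scalar multiple of $\mathbf{1}$ --- and that is exactly what you need so that right-multiplication by $\mathrm{diag}(PR')$ produces a state-independent scalar $g^{\pi}$. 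With that wording corrected, your argument goes through; the limit-interchange you flag is indeed the only analytic subtlety, and for a finite ergodic chain the standard Abel--Ces\`aro equivalence handles it.
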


\section{The adjustment of policy improvement for CAMDPs}
\label{Sec_Switching}

As discussed, nonstationary is one of the most challenges for multi-agent reinforcement learning (MARL).
For zero-sum MARL, Mazumdar et al. \cite{mazumdar2019policy} showed the convergence result of single-agent policy gradient methods is provably non-convergent in simple linear-quadratic games.  The reason is that the agents concurrently improve their policies according to their own interests. The individual agent cannot tell whether the state transition or the change in reward is an actual outcome due to its own action or if it is due to other agents' explorations.

For cooperative reinforcement learning of CAMDPs investigated in this study, the two agents can be coordinated to adjust their policies if $cheap$ communication channels \cite{goldman2004decentralized} are available. In addition, to well handle the non-stationary problem, we can design a two-layer hierarchical learning framework, and implement various switching strategies at the high layer. Especially for the proposed CAMDPs model, we believe that an "intelligent" switching algorithm can be developed in the high layer to significantly reduce the bad effect of nonstationary. For example, we may design a special switching law so that the two agents, instead of performing their Policy Improvement procedure simultaneously,  alternatively improve their policy. In this case, the mutual influences due to policy adjustment are decreased, and the nonstationary phenomenon could be remedied.

However, in some cases, constructing an optimal switching law in the high layer could be time-consuming or unnecessary. In this study, instead of pre-design a switching law,  we introduce new methods to enable the two agents "automatically" adjust the updation frequency of their Policy Improvement procedures and ensure the convergence of the mutual learning process.

In the following two subsections, we introduce the proposed self-adjustment switching methods for CAMDPs, which are extendable to other multi-agent reinforcement learning problems.

%As discussed before, non-stationary is one of the most challenging problems for multi-agent reinforcement learning especially for zeor-sum multi-agent cases. For cooperative reinforcement learning discussed in this study, the agents can be coordinated to adjust their policies if $cheap$ communication channels \cite{goldman2004decentralized} are available. In this case, some hierarchical $smart$ learning structure can be constructed. In the high level, various switching strategies can be applied, which has the great potential to fully address the non-stationary problem. For CAMDPs, a direct switching policy would alternatively adjust the control policies of each agent so as to reduce the difficulty of multi-agent reinforcement learning problem as a single agent problem or a switching control problem, which have been well addressed in system control area.
%
%However, in some cases, to construct a hierarchical switching structure could be impossible or unnecessary. In this study, we introduce methods for the agents to self-adjust their switching frequency of Policy Improvement to automatically implement policy switching and ensure the convergence.
%
%In the following two subsections, we introduce the self-adjustment switching methods for CAMDPs, which might be extendable to other multi-agent reinforcement learning problems.

\subsection{Revised Policy Improvement procedure for Agent$_0$}

As discussed in Introduction section, for medical applications where policies
correspond to treatments \cite{dann2019policy} \cite{yu2019convergent}, it is not feasible to $frequently$ switch the policy of Agent$_0$ (i.e. the patient). To reduce this frequency, we first introduce a new Policy Improvement procedure for Agent$_0$. We will show that this procedure can significantly decrease the switching frequency with a predefined bounded value loss.

To give a more intuitive introduction to the proposed switching frequency tuning approach, motivated by the tuning of the second-order system, we borrow the phrase "damping ratio" to represent the degree of tunability of the switching frequency. To tune the "damping ratio" of the system, i.e., adjusting the switching frequency, we will introduce new approaches.  The key to these approaches is a revised Policy Improvement procedure (Procedure \ref{prd_1} below).  Compared with classical reinforcement learning, via a predefined threshold value ($\eta$), the revised Policy Improvement procedure can reduce the switching frequency with the bounded value-loss proportional to $\eta$.

%As discussed before, it is not suitable to switch the control policy $frequently$ for Agent$_0$ (i.e. the patient). Motivated by the tuning of second order system, we borrow the phrases "damping ratio" to represent the degree of tunability of the switching frequency of each agent. To increase the "damping ratio" of Agent$_0$, i.e., reduce the switching frequency, there are several approaches. Here we introduce the first approach, which is the using of a threshold value $\eta$ for the policy improvement. Now we provide the Procedure \ref{prd_1}) for the revised policy improvement of the classical reinforcement learning.

\begin{procedure} \label{prd_1} (Revised Policy Improvement procedure)
Policy Improvement\\
$policy-stable \longleftarrow true$\\
For each $s \in S$, $k$ and a given $\eta$:

$\,\,\,\,\,\, temp \longleftarrow \pi_{k}(s)$

$\,\,\,\,\,\,\,\,\,\,\,\,$ Under policy $\pi_{k}$ calculate

$\,\,\,\,\,\,\,\,\,\,\,\,J_k= \sum_{s'_0,r} p(s',r|s,a)[r^{a}_{s,s'}+\gamma V(s')]$.

$\,\,\,\,\,\,\,\,\,\,\,\,$ If $\underset{a}{\max} (\sum_{s'_0,r} p(s',r|s,a)[r^{a}_{s,s'}+\gamma V(s')]) - J_k \ge \eta$,

$\,\,\,\,\,\,\,\,\,\,\,\,\,\, \pi(s) \longleftarrow \underset{a}{\arg\max} (\sum_{s'_0,r} p(s',r|s,a)[r^{a}_{s,s'}+\gamma V(s')] ). $

$\,\,\,\,\,\,$ If $ temp \ne \pi(s)$, then $policy-stable \longleftarrow false$\\
If $policy-stable$, then stop and return $V$ and $\pi$; else go to Policy Evaluation step.
\end{procedure}

\begin{theorem} \label{m_theorem}
For a CaMDPs, we assume it is ergodic under all policies ${\pi \in \Pi}$. If the Agent$_0$ is adjusted according to Revised-Policy-Improvement (Procedure \ref{prd_1}), and the Agent$_1$ is performing under the policy $\pi_1^j$, then, the value loss $\bold{\delta V}$ will be less than $\eta [\bold{I} - \gamma \bold{P}^{\pi^*}]^{-1} \bold {1}$, where $\bold{P}^{\pi^*}$ is the state probability transition matrix under optimal policy $\pi^*$, and $\bold {1}$ is the all one vector.
\end{theorem}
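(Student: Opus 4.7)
The plan is to reduce the two-agent setting to an effective single-agent problem and then carry out a perturbation-style analysis of policy iteration with an $\eta$-slack in the greedy step. Since Agent$_1$ is fixed at $\pi_1^j$, the joint transition kernel and reward collapse into a single-agent MDP for Agent$_0$ with policy operators $T^{\pi_0}$ and optimality operator $T^* = \max_{\pi_0} T^{\pi_0}$. Let $\pi$ denote the policy to which Procedure \ref{prd_1} converges and $V^{\pi}$ its exact value obtained from the Policy Evaluation step, so that $V^{\pi} = T^{\pi} V^{\pi}$. Let $\pi^*$ denote Agent$_0$'s optimal response against $\pi_1^j$, giving $V^* = T^{\pi^*} V^* = T^* V^*$. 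Ergodicity together with $0<\gamma<1$ ensures $(I - \gamma P^{\pi^*})^{-1}$ exists and is entrywise non-negative via its Neumann series $\sum_{k\ge 0}\gamma^k (P^{\pi^*})^k$.

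First I would unpack the termination condition of Procedure \ref{prd_1}. The procedure halts only when no $\pi(s)$ changes during a sweep, which means that at every state $s$ the greedy improvement failed the threshold test: $\max_a \sum_{s',r} p(s',r\mid s,a)[r + \gamma V^{\pi}(s')] - J_k(s) < \eta$. Because $V = V^{\pi}$ after the preceding Policy Evaluation, $J_k(s) = V^{\pi}(s)$, so the state-wise inequality tightens into the vector inequality
\[
T^* V^{\pi} \le V^{\pi} + \eta\, \mathbf{1}.
\]

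Next I would bound $V^* - V^{\pi}$ entrywise. Using $V^* = T^{\pi^*} V^*$, the inequality $T^{\pi^*} V^{\pi} \le T^* V^{\pi}$, and the termination bound,
\[
V^* - V^{\pi} = T^{\pi^*} V^* - V^{\pi} \le T^{\pi^*} V^* - T^{\pi^*} V^{\pi} + \eta\, \mathbf{1} = \gamma P^{\pi^*}(V^* - V^{\pi}) + \eta\, \mathbf{1}.
\]
Rearranging yields $(I - \gamma P^{\pi^*})(V^* - V^{\pi}) \le \eta\, \mathbf{1}$, and multiplying by the entrywise non-negative matrix $(I - \gamma P^{\pi^*})^{-1}$ preserves the inequality, giving exactly the claimed bound $\boldsymbol{\delta V} = V^* - V^{\pi} \le \eta\,[I - \gamma P^{\pi^*}]^{-1}\mathbf{1}$.

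The main obstacle I anticipate is justifying that Procedure \ref{prd_1} actually terminates, so that the termination inequality is available. Classical policy iteration terminates because it strictly improves $V^{\pi}$ on a finite policy set, but the $\eta$-gated update changes $\pi(s)$ only when the gain beats $\eta$, so one must argue that each successful sweep still strictly improves $V^{\pi}$ and therefore cannot revisit a previous policy; combined with the finiteness of the policy space (guaranteed by the finite-MDP assumption of the paper), this precludes cycling. A secondary subtlety is the inequality-preservation step under $(I - \gamma P^{\pi^*})^{-1}$; ergodicity of the CaMDPs together with Lemma \ref{Lm1} and Lemma \ref{Lm2} gives stochasticity of $P^{\pi^*}$, and $\gamma<1$ then guarantees convergence and entrywise non-negativity of the Neumann series used above.
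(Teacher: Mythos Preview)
Your argument is correct and uses the same core machinery as the paper: reduce to a single-agent MDP by freezing $\pi_1^j$, exploit the $\eta$-slack inequality coming from the stopping condition of Procedure~\ref{prd_1}, and then push the bound through the entrywise non-negative resolvent $(I-\gamma P)^{-1}=\sum_{k\ge 0}\gamma^k P^k$. The one structural difference is which pair of policies you compare. The paper compares the current policy $\pi^m$ with the \emph{one-step classical improvement} $\pi^n$, derives $\boldsymbol{\delta V}=(I-\gamma P^{\pi^n})^{-1}\mathbf{g}$ with $0\le g_i\le\eta$, and only at the very end specializes to the case ``$\pi^n$ is optimal'' to get $P^{\pi^*}$ in the bound. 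You instead compare the terminal policy $\pi$ directly with $\pi^*$ via $T^{\pi^*}V^{\pi}\le T^*V^{\pi}\le V^{\pi}+\eta\mathbf{1}$, which yields the stated bound without the extra hypothesis that the one-step improvement already hits $\pi^*$. Your route is therefore a bit tighter and also makes the termination issue explicit, which the paper's proof does not address; your sketch (each $\eta$-triggered update strictly increases $V^{\pi}$, finitely many policies, hence no cycling) is the right justification.
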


\begin{proof}
If we treat the overall CAMDPs as a single agent MDPs, and its probability transition matrices and reward matrices under policy $\pi_0$ and $\pi_1$ can be constructed based on those of the sets of states $S_0$, $S_s$ and $S_1$. Then, we consider the case that $\pi_1$ is fixed on its $j$-th policy, i.e., $\pi_1^j$. Then, it is time for the policy improvement of Agent$_0$. If assume the current policy for the Agent$_0$ is $\pi_0^m$, and under the classical policy improvement procedure the selected policy is $\pi_0^n$. We denote the augmented two policies for the overall system as follows:
\begin{equation*}
\begin{array}{cc}
     \pi^m=\{\pi_0^m, \pi_1^j\} & \pi^n=\{\pi_0^n, \pi_1^j\}
\end{array}
\end{equation*}

Following the classical policy-improvement routine \cite{howard1960dynamic}, since $\pi_0^n$ was chosen over $\pi_0^m$, we have
$$
r_i^{\pi^n} + \gamma \sum_{j=1}^N p_{ij}^{\pi^n} v_j^{\pi^m} \ge
r_i^{\pi^m} + \gamma \sum_{j=1}^N p_{ij}^{\pi^m} v_j^{\pi^m}.
$$
where $i \in \{1, 2, \cdots, N\}$ and $N$ is the total number of states of the augmented system.

For the two combined policies $\pi^m$ and $\pi^n$ and each state $s_i$, we have the following equations:

\begin{equation} \label{eq0a}
\begin{array}{cc}
v_i^{\pi^m} &= r_i^{\pi^m} + \gamma \sum_{j=1}^N p_{ij}^{\pi^m} v_j^{\pi^m}, \\
 v_i^{\pi^n}& = r_i^{\pi^n} +\gamma \sum_{j=1}^N p_{ij}^{\pi^n} v_j^{\pi^n}.
\end{array}
\end{equation}

Considering policy improvement procedure, we define for each particular state $s_i$:
\begin{equation} \label{eq1a}
g_i=r_i^{\pi^n} + \gamma \sum_{j=1}^N p_{ij}^{\pi^n} v_j^{\pi^m} -
r_i^{\pi^m} - \gamma \sum_{j=1}^N p_{ij}^{\pi^m} v_j^{\pi^m}
\end{equation}
It is clear that $\forall i, g_i \ge 0$.
Furthermore, under the policies $\pi^m$ and $\pi^n$, based on both equations (\ref{eq0a}) and (\ref{eq1a}), we have
\begin{equation}
    v_i^{\pi^n}-v_i^{\pi^m}=g_i + \gamma \sum_{j=1}^{N} p_{ij}^{\pi^n} (v_j^{\pi^n}-v_j^{\pi^m}).
\end{equation}
Defining $\delta v_i = v_i^{\pi^n}-v_i^{\pi^m}$, we have
\begin{equation}
    \delta v_i = g_i + \gamma \sum_{j=1}^{N} p_{ij}^{\pi^n} \delta v_j
\end{equation}
As it assumed the overall CAMDPs is ergodic under all policies, we know the vector solution of the above equation is as follows:
\begin{equation}
    \bold{\delta V} = [\bold{I} - \gamma \bold{P}^{\pi^n}]^{-1} \bold{g}.
\end{equation}
It can be seen that  $\bold{\delta V} \ge 0$ as the elements of $[\bold{I} - \gamma \bold{P}^{\pi^n}]^{-1}=\sum_{k=0}^{+\infty}\gamma^k {(\bold{P}^{\pi^n})}^k$ are all {\bf non-negative}. Furthermore, if $g_i \le \eta$, then we have
$$\bold {\delta V} \le \eta [\bold{I} - \gamma \bold{P}^{\pi^n}]^{-1} \bold {1}.$$
If the policy $\pi^n$ is optimal, we can see the value loss will be no great than $\eta [\bold{I} - \gamma \bold{P}^{\pi^*}]^{-1} \bold {1}=\frac{\eta}{1-\gamma}$.
\end{proof}

\begin{note}
Here, we only consider the case that the Agent$_1$ is performing the control policy $\pi_1^j$ ($j \in \{1, \cdots, N_{action_1} \}$ ($N_{action_1}$ is the number of action options of Agent$_1$). To ensure the overall loss is less than $\epsilon$, we need to consider the cases $\forall \pi_1^j$, $j \in \{1, \cdots, N_{action_1} \}$.
\end{note}

\subsection{Self-adjusted switching adaptation strategies for CaMDPs}
\label{sec:Exp}

To address the nonstationary issue for the proposed CaMDPs, we introduce a practical method based on Revised Policy Improvement procedure to improve the convergence of the CaMDPs. We illustrate this idea via the step response of a classical second-order system.  For a second-order system, if its damping ratio "$\zeta $" decreases, it will respond fast but with less stability margin. On the other hand, if its damping ratio increases, the system will respond slowly but with a big stability margin. For the two agents of the CaMDPs, if one of the agents responds much fast than the other (normally, we anticipate Agent$_1$ (robot) has a faster response than Agent$_0$ (patient) in terms of policy optimization), then the chance of non-stationary will be lower. Based on this idea, we provide the following two methods to adjust the policy improvement rate (i.e., $damping$ $ratio$ ) of the two agents so that they can work harmonically.

Based on the revised Policy Improvement procedure (Procedure \ref{prd_1}), a threshold $\eta$ is predefined based on the tolerable value loss. This fixed threshold $\eta$ is similar to a $proportional$ measurement of value loss. As in most cases, the full model, as well as the asymptotic value, could not be obtained at the beginning of the learning procedure, to pre-select a proper $\eta$ is unrealistic. An alternative option is to select a relatively big $\eta$ and define an $integral-alike$ parameter. Together with the pre-selected $proportional-alike$ parameter, we can then construct a proportional and integral type of policy improvement scheme to tune the policy improvement frequency so that the $damping$ $ratio$ of the system can be tuned with the desired pace. We can fine-tune the two parameters during the learning process. We call this approach a "PI-alike" adjustment scheme.

Specifically, assume the integral-alike parameter is "$\kappa_I$", we can modify the switching condition in Procedure \ref{prd_1} as follows:

\begin{procedure} \label{prd_2} (PI-alike Policy Improvement procedure)
Policy Improvement\\
$policy-stable \longleftarrow true$\\
For each $s \in S$, $k$, and the pre-selected parameters $M$, $\eta$ and $\kappa_I$:

$\,\,\,\,\,\, temp \longleftarrow \pi_{k}(s)$

$\,\,\,\,\,\,\,\,\,\,\,\,$ Under policy $\pi_{k}$ calculate

$\,\,\,\,\,\,\,\,\,\,\,\,J_k= \sum_{s'_0,r} p(s',r|s,a)[r+\gamma V(s')]$.

$\,\,\,\,\,\,\,\,\,\,\,\,$
Calculate $I_k=\underset{a}{\max} (\sum_{s'_0,r} p(s',r|s,a)[r+\gamma V(s')]) - J_k$.

$\,\,\,\,\,\,\,\,\,\,\,\,$
if $\kappa_I \sum_{j=0}^{M} I_{k-j} \ge \eta$,

$\,\,\,\,\,\,\,\,\,\,\,\,\,\, \pi(s) \longleftarrow \underset{a}{\arg\max} (\sum_{s'_0,r} p(s',r|s,a)[r+\gamma V(s')] ). $

$\,\,\,\,\,\,$ If $ temp \ne \pi(s)$, then $policy-stable \longleftarrow false$\\
If $policy-stable$, then stop and return $V$ and $\pi$; else go to Policy Evaluation step.
\end{procedure}

%PID adjustment strategy.

The implementation of PI-alike tuning method needs to modify the classical Policy Improvement procedure. We will show this approach by using numerical simulation in Section \ref{nu_e}.

\section{Numeral Analysis} \label{nu_e}
In this section, we consider a simple example for the co-adaption between rehabilitation robot and the human user.

Let us consider a CaMDPs system $M=<S, \, A_0,\, A_1, P_0, P_1,R_0, R_1>$, with details as follows:

The state is the Kronecker product of three state sets:
$S_0=\{s_{00}, s_{01}\}$,
$S_s=\{s_{s0}, s_{s1}\}$, and
$S_1=\{s_{10}, s_{11}\}$.

The action set is the Kronecker product of two sets:
$A_0=\{a_{0_0}, a_{0_1}\}$ and $A_1=\{a_{1_0}, a_{1_1}\}$.

The probability transition matrices under different control actions for each subsystem are:
\begin{equation*}
    P_0(S_0,a_{0_0},S_0)=\left[\begin{array}{cc}
   0.8229  &  0.1771\\
    0.7826 &   0.2174
       \end{array}\right];
\end{equation*}
\begin{equation*}
    P_0(S_0,a_{0_1},S_0)=\left[\begin{array}{cc}
    0.6406  &  0.3594\\
    0.4919  &  0.5081
    \end{array}\right];
\end{equation*}

\begin{equation*}
    P_s(S_s,a_{0_0},a_{1_0},S_s)=\left[\begin{array}{cc}
     0.5821  &  0.4179\\
    0.3839   & 0.6161
    \end{array}\right];
\end{equation*}
\begin{equation*}
    P_s(S_s,a_{0_0},a_{1_1},S_s)=\left[\begin{array}{cc}
 0.1838  &  0.8162\\
    0.5686 &   0.4314
    \end{array}\right];
\end{equation*}
\begin{equation*}
    P_s(S_s,a_{0_1},a_{1_0},S_s)=\left[\begin{array}{cc}
    0.6990  &  0.3010\\
    0.6169  &  0.3831
    \end{array}\right];
\end{equation*}
\begin{equation*}
    P_s(S_s,a_{0_1},a_{1_1},S_s)=\left[\begin{array}{cc}
      0.3448 &   0.6552\\
    0.6432   & 0.3568
    \end{array}\right];
\end{equation*}

\begin{equation*}
    P_1(S_1,a_{1_0},S_1)=\left[\begin{array}{cc}
   0.8022  &  0.1978\\
    0.5396 &   0.4604
    \end{array}\right];
\end{equation*}
\begin{equation*}
    P_1(S_1,a_{1_1},S_1)=\left[\begin{array}{cc}
    0.4083  &  0.5917\\
    0.5815  &  0.4185
    \end{array}\right].
\end{equation*}

The reward function are as follows:
\begin{equation*}
    R_0(S_0,a_{0_0},S_0)=\left[\begin{array}{cc}
   0.1565  &  0.1769\\
    0.1909 &   0.1425
    \end{array}\right];
\end{equation*}
\begin{equation*}
    R_0(S_0,a_{0_1},S_0)=\left[\begin{array}{cc}
     0.0520  &  0.2813\\
    0.1530   & 0.1803
    \end{array}\right];
\end{equation*}

\begin{equation*}
    R_s(S_s,a_{0_0},a_{1_0},S_s)=\left[\begin{array}{cc}
   0.2136  &  0.1197\\
    0.1533 &   0.1800
    \end{array}\right];
\end{equation*}
\begin{equation*}
    R_s(S_s,a_{0_0},a_{1_1},S_s)=\left[\begin{array}{cc}
 0.3047  &  0.0286\\
    0.0895  &  0.2438
    \end{array}\right];
\end{equation*}
\begin{equation*}
    R_s(S_s,a_{0_1},a_{1_0},S_s)=\left[\begin{array}{cc}
    0.0077 &   0.3257\\
    0.1378 &   0.1955
    \end{array}\right];
\end{equation*}
\begin{equation*}
    R_s(S_s,a_{0_1},a_{1_1},S_s)=\left[\begin{array}{cc}
     0.2806 &   0.0527\\
    0.1625  &  0.1708
    \end{array}\right];
\end{equation*}

\begin{equation*}
    R_1(S_1,a_{1_0},S_1)=\left[\begin{array}{cc}
    0.0190  &  0.3144\\
    0.3120  &  0.0213
\end{array}\right];
\end{equation*}
\begin{equation*}
    R_1(S_1,a_{1_1},S_1)=\left[\begin{array}{cc}
    0.1878  &  0.1455\\
    0.0450  &  0.2883
    \end{array}\right].
\end{equation*}

\subsection{The asymptotic characteristics for values under different discount factors }
When the discount factor $\gamma=0.5$, see the bottom part of Fig.\ref{fig_1}. The asymptotic values of each states are relatively different with each other. However, when the discount factor close to $1$, see the top part of Fig.\ref{fig_1} ($\gamma=0.98$). The asymptotic values of each states are quite close. This is consistent with Lemma \ref{Lm1}. Based on this property, we will provide a better policy without considering the inter-states differences.

\begin{figure}[ht]
\centering{\includegraphics[width=0.45\textwidth]{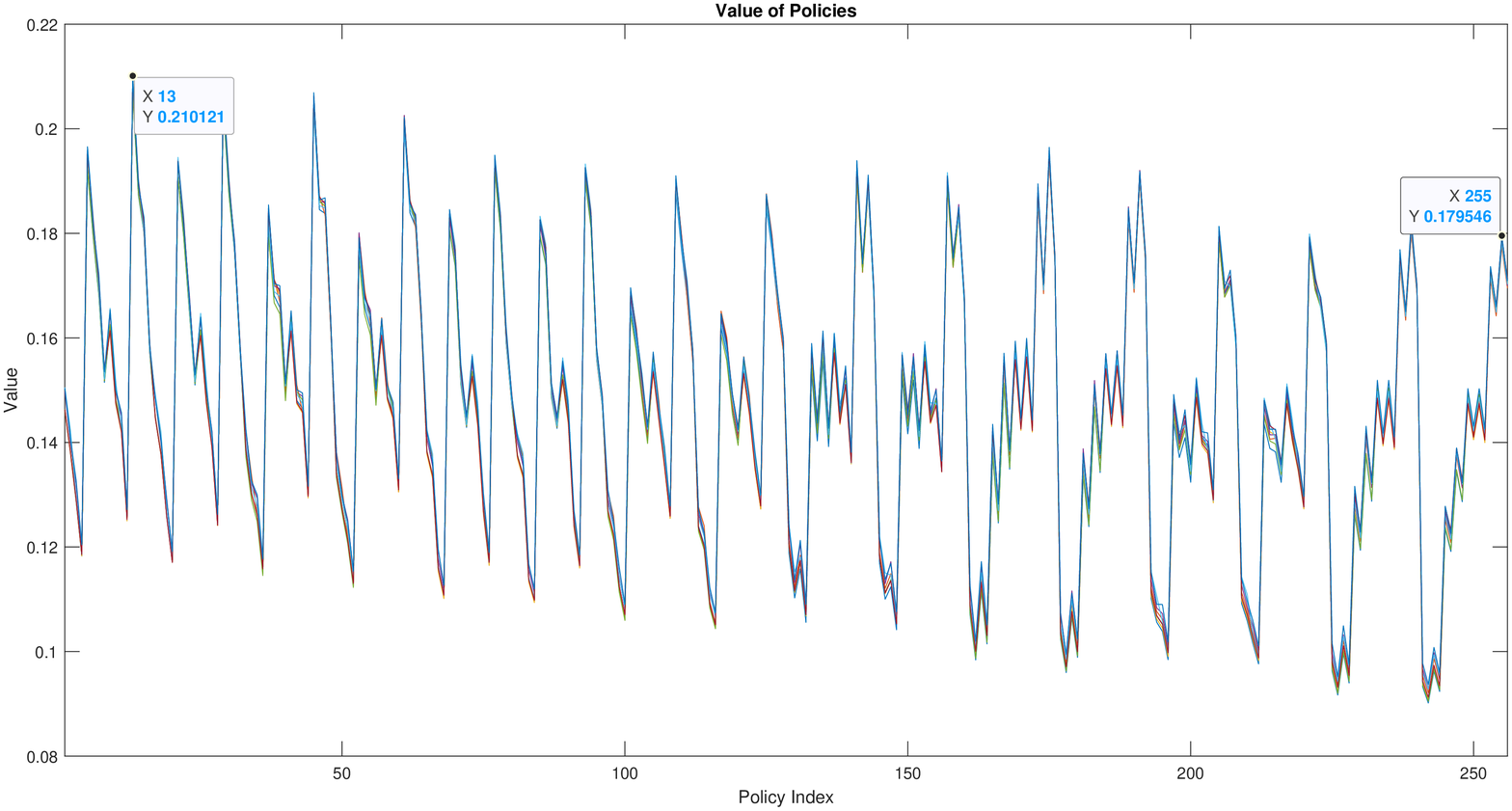}}
\centering{\includegraphics[width=0.45\textwidth]{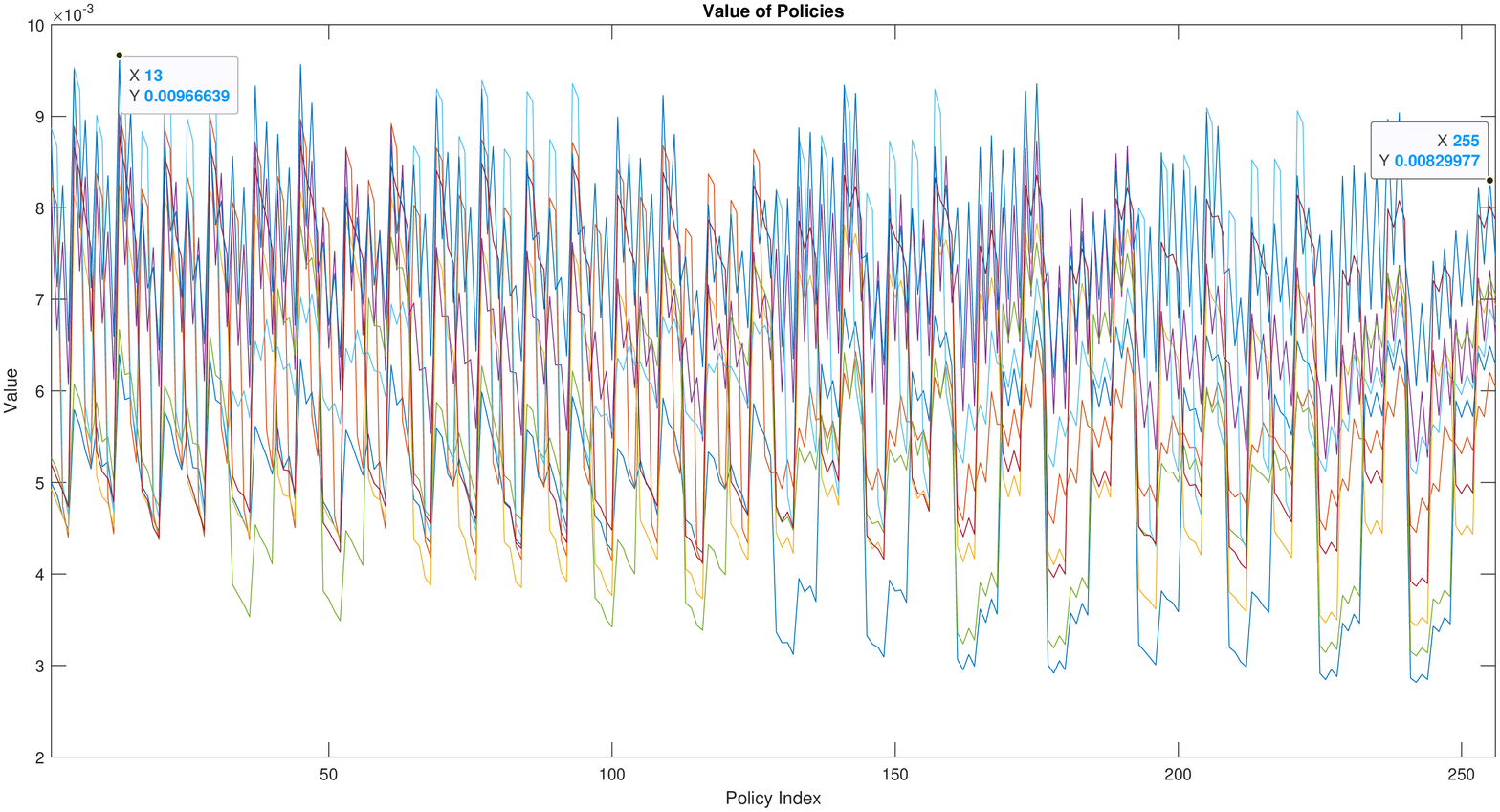}}\\
\caption{Value vs state under different discount factors:\\ a. $\gamma=0.98$ (left). b. $\gamma=0.50$ (right). }\label{fig_1}
\end{figure}

\subsection{The value loss under the revised policy improvement procedure}

According to Procedure \ref{prd_1} (Revised   Policy   Improvement   Procedure), and Theorem \ref{m_theorem}, we perform the numerical analysis.

The optimal policy ($value=0.2101$), based on the asymptotic analysis of Lemma, can be calculated as follows:

For Agent$_0$, it is
\begin{equation*}
\left[
\begin{array}{ccccc}
State: &\{s_{0_0}, s_{s_0}\}& \{s_{0_0}, s_{s_1}\}& \{s_{0_1}, s_{s_0}\}& \{s_{0_1}, s_{s_1}\} \\
Action_0:& 0& 0& 0& 0
\end{array}
\right].
\end{equation*}

For Agent$_1$, it is
\begin{equation*}
\left[
\begin{array}{ccccc}
State: &\{s_{1_0}, s_{s_0}\}& \{s_{1_0}, s_{s_1}\}& \{s_{1_1}, s_{s_0}\}& \{s_{1_1}, s_{s_1}\} \\ Action_1:& 1& 1& 0& 0
\end{array}
\right].
\end{equation*}
For simplicity, when there is no confusion, we use $\pi_0=[0\,0\,0\,0]$ and $\pi_1=[1\, 1\, 0\, 0]$, or $[0\,0\,0\,0]$ $[1\, 1\, 0\, 0]$ for short.

If we set the initial control policy as $\pi_0=[1\,1\,1\,1]$ and $\pi_1=[1\,1\,0\,0]$ ($value=0.1736$), and Agent$_0$ adopts the Procedure \ref{prd_1} while Agent$_1$ still adopts the normal Policy Improvement strategy.

When we select different $\eta$ values, the Agent$_0$ can either stay in the same policy (to avoid any switching of control policy), or partly switching the policy with respect to some states, or fully switching to the optimal policy. To see the detailed results, see Table \ref{tb_1}.

From Table \ref{tb_1}, it can be seen that the $\eta$ value should be well selected in order to avoid nonstationary. However, in the beginning, as the lacking of the knowledge of the process, pre-select a suitable $\eta$ will be challenging. Here, we applied the PI-alike approach as described by Procedure \ref{prd_2}. We simply select $\kappa_I=1$ and $\eta=0.1$; the nonstationary of policy improvement has been addressed, and they converged to the optimal policy $\pi_0^*=[0\,0\,0\,0]$ and $\pi_1^*=[1\,1\,0\,0]$ ($value=0.210$) as shown in fig. \ref{fig_2}.

\begin{figure}[bt]
\centering{\includegraphics[width=0.45 \textwidth]{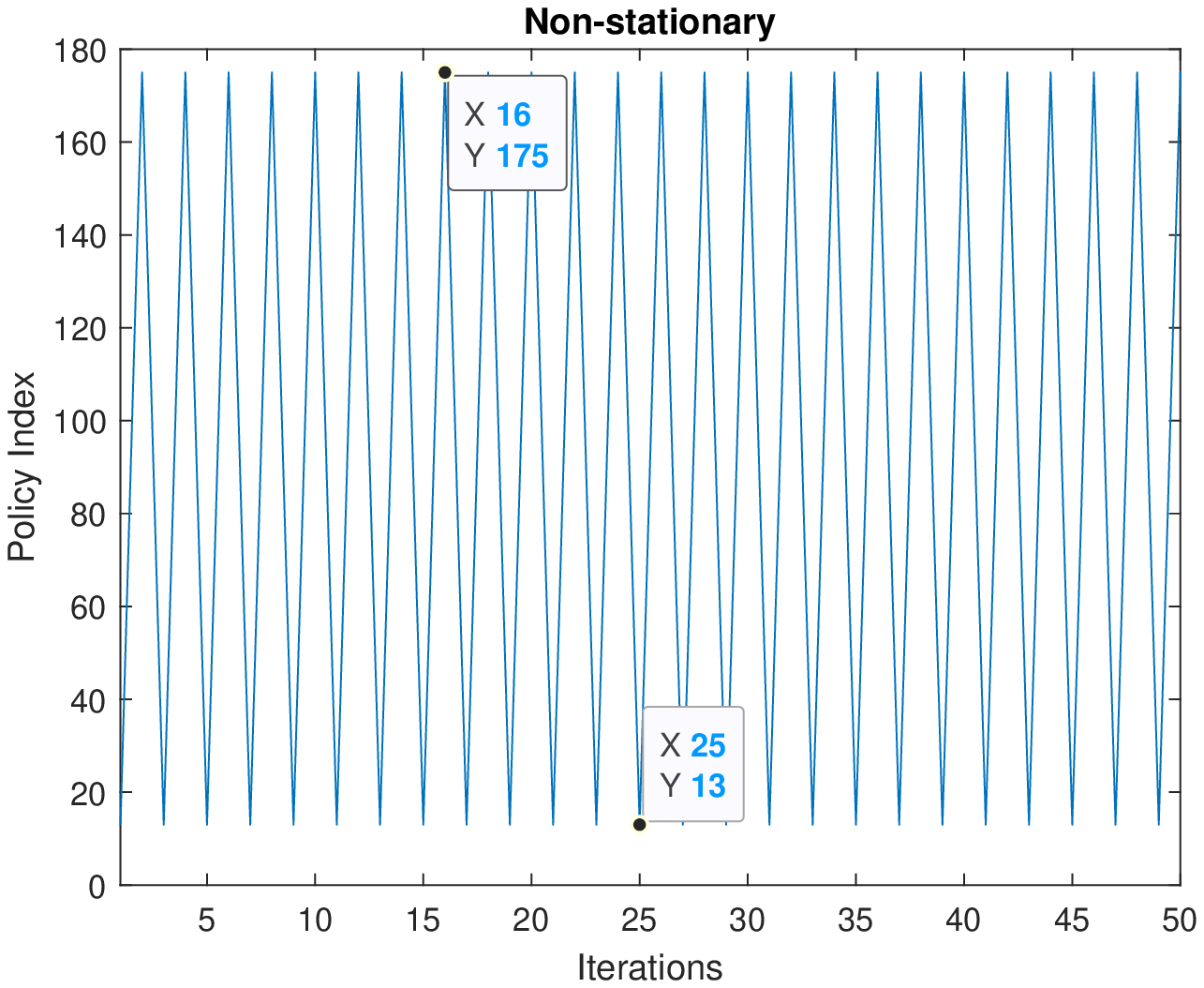}}
\centering{\includegraphics[width=0.45 \textwidth]{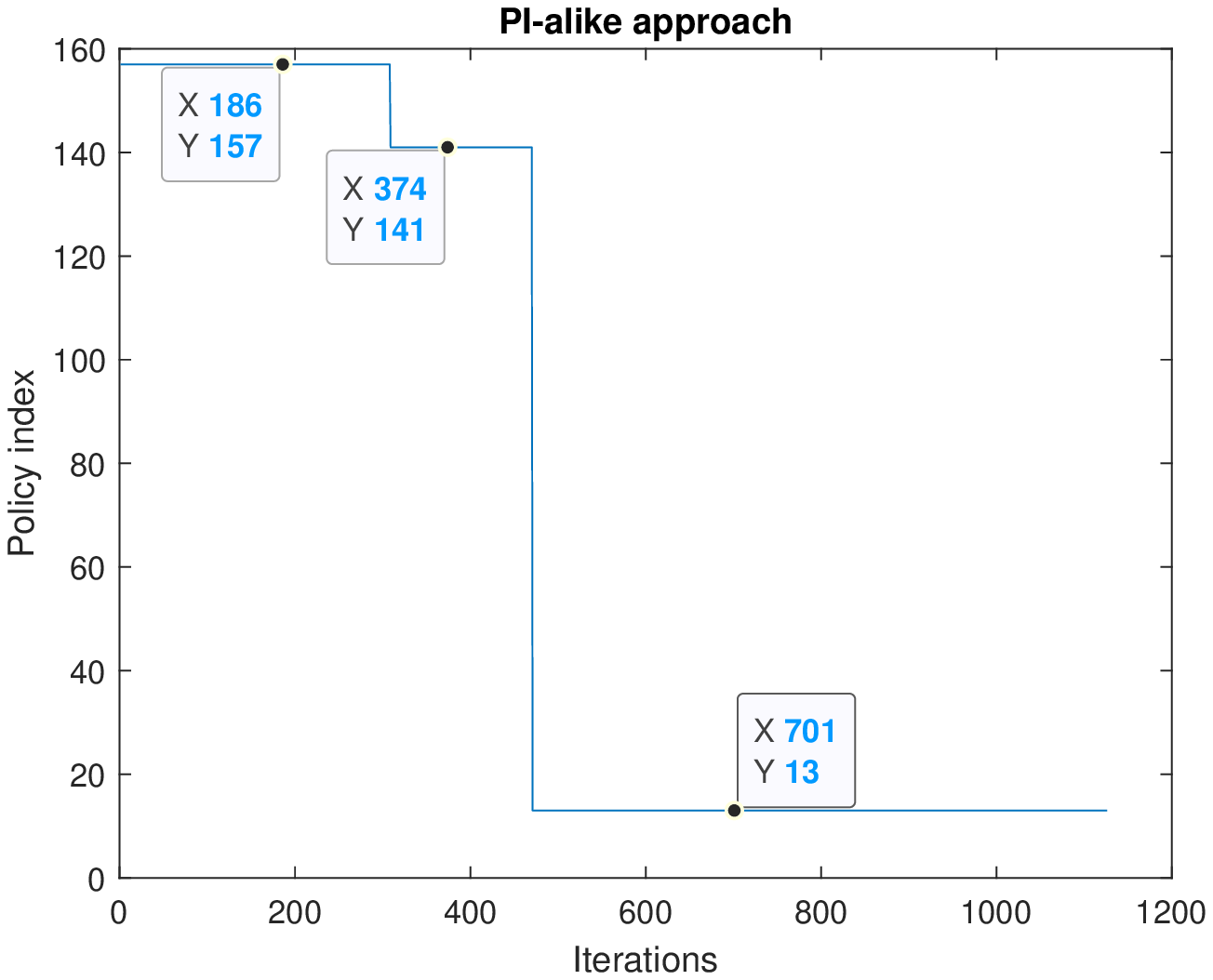}}\\
\caption{The convergence of Policy Improvement: a. Non-stationary. b. PI-alike approach for the improvement of the convergence. }\label{fig_2}
\end{figure}

\begin{table}[]
\begin{tabular}{|l|l|l|l|}
\hline
 Policy& Policy No & Policy Values & $\eta$ values  \\ \hline
$ [1 1 1 1] [1 1 1 1]$& No.256 & 0.180 &  $>1.0625\times10^{-4}$  \\ \hline
 $[0 1 1 1] [1 1 0 0]$& No.125 & 0.187 & $1.0000\times10^{-4}$   \\ \hline
 $[0 0 0 1][1 1 0 0]$& No.29 & 0.207 & $8.1250\times10^{-5}$ \\ \hline
 $[0 0 0 0][1 1 0 0]$& No.13 & 0.210 & $6.2500\times10^{-5}$ \\ \hline
 $\frac{[0 0 0 0][1 1 0 0]}{[1 0 1 0][1 1 1 0]}$& No.13 vs No.175 & 0.210 vs 0.196 & $<3.1250 \times 10^{-5}$ \\ \hline
\end{tabular}
\caption{The numerical analysis for the Revised Policy Improvement procedure.}
\label{tb_1}
\end{table}

%
%\begin{table}[]
%\begin{tabular}{|l|l|l|l|}
%\hline
% Policy& Policy No & Policy Values & $\eta$ values  \\ \hline
%$ [0 0 1 1] [1 1 1 1]$& No.256 & 0.180 &  $>1.0625e-4$  \\ \hline
% $[1 1 1 1] [1 1 1 1]$& No.125 & 0.187 & $1e-4$   \\ \hline
% $[1 1 0 0][1 1 1 1]$& No.29 & 0.207 & $8.1250e-5$ \\ \hline
% $[1 1 0 0][1 1 1 1]$& No.13 & 0.210 & $6.2500e-5$ \\ \hline
% $[1 1 0 0][1 1 1 1]$& No.13 vs No.175 paramet42.mat $\gamma =0.98$ & 0.210 vs 0.196 & $<3.125e-5$ \\ \hline
%\end{tabular}
%\caption{The numerical analysis for the Revised Policy Improvement procedure.}
%\label{tb_1}
%\end{table}

%\begin{table}[]
%\begin{tabular}{llll}
%Policy& Policy No & Policy Values & $\eta$ values  \\
%$[0 0 1 1] [1 1 1 1]$& 64 & 0.1646 &  $>0.0065$  \\
%$[1 1 1 1] [1 1 1 1]$& 256 & 0.2207 & $0.0055~0.0060$   \\
%$[1 1 0 0][1 1 1 1]$& 208 & 0.2708 & $<0.005$
%\end{tabular}
%\label{tb_1}
%\end{table}

%\subsection{The tuning of "damping ratio" of the two agents}
%In this subsection, we will show how to adjust the "damping ratio" of Agent$_0$ and Agent$_1$ by simply tuning the weights of the rewards of the subsystems.
%
%When the weight of the two subsystems are equal to $1$, the overall system is non-stationary; the overall policy jumps from $No.6$ and $No.8$ (i.e., the policy of Agent$_1$ jumps between $[0 1 0 1]$ and $[0 1 1 1]$). However, if we reduce the weight of the reward of Agent$_1$ to $0.15$. We can consider that the relative damping ratio has been increased; the overall system converge to policy $No.6$.

\section{Conclusion}
\label{sec:Conclusion}
In this paper, to build a RL based framework for stroke rehabilitation, we proposed a cooperative  Markov Decision Processes model for improving the co-adaptive learning processes for both patient and smart rehabilitation devices. We studied this model in the framework of multi-agent Reinforcement Learning, in which the most critical problem is the non-stationary of the co-adaptation during mutual learning of the two agents. We proposed several auto-switching strategies to ensure the convergence of the policy improvement process. Furthermore, to reduce the frequency of the policy improvement of the patient, we proposed a Revised Policy Improvement procedure, which can balance between policy improvement and the overall value loss.

The numerical study of co-adaptation between patients and smart rehabilitation devices has been performed based on the proposed policy improvement strategies. Our final goal is to implement optimal adaptive learning from the perspective of both patient and the smart rehabilitation robot in real experiments and with clinical settings.

%For stroke rehabilitation, a looming problem for both single and multi-agent RL is the selection of reward functions based on real rehabilitation scenarios under the guidance of the domain experts \cite{nguyen2018human} \cite{wang2007emotion} \cite{maadi2021review}.

%Moreover, the communication and reasoning skills of experts are
%quite hard to be integrated into the model of multi-agent setting.
%%These pose important research questions toward extensions of the imitation learning and the inverse RL to MADRL methods.
%In addition, for complex tasks or behaviors
%which are difficult for humans to demonstrate, there is a
%need for alternative methods that allow human preferences to
%be integrated into RL \cite{nguyen1806multi} \cite{nguyen2018human} \cite{nahavandi2017trusted} \cite{nahavandi2017trusted}.

%\section{Acknowledgement}
%\label{sec:Acknowledgement}
%
%This project was supported.

%\addtolength{\textheight}{-12cm}   % This command serves to balance the column lengths
                                  % on the last page of the document manually. It shortens
                                  % the textheight of the last page by a suitable amount.
                                  % This command does not take effect until the next page
                                  % so it should come on the page before the last. Make
                                  % sure that you do not shorten the textheight too much.

%%%%%%%%%%%%%%%%%%%%%%%%%%%%%%%%%%%%%%%%%%%%%%%%%%%%%%%%%%%%%%%%%%%%%%%%%%%%%%%%

%%%%%%%%%%%%%%%%%%%%%%%%%%%%%%%%%%%%%%%%%%%%%%%%%%%%%%%%%%%%%%%%%%%%%%%%%%%%%%%%

%%%%%%%%%%%%%%%%%%%%%%%%%%%%%%%%%%%%%%%%%%%%%%%%%%%%%%%%%%%%%%%%%%%%%%%%%%%%%%%%

%%%%%%%%%%%%%%%%%%%%%%%%%%%%%%%%%%%%%%%%%%%%%%%%%%%%%%%%%%%%%%%%%%%%%%%%%%%%%%%%

\small

\bibliography{VO2}

\end{document}